\newtheorem*{rep@theorem}{\rep@title}
\newcommand{\newreptheorem}[2]{%
\newenvironment{rep#1}[1]{%
 \def\rep@title{#2 \ref{##1}}%
 \begin{rep@theorem}}%
 {\end{rep@theorem}}}
\newtheorem{theorem}{Theorem}%
\newtheorem{lemma}{Lemma}%
\newtheorem{corollary}{Corollary}%
\newtheorem{example}{Example}%
\newcommand{\qmin}{q^-}
\newcommand{\qmax}{q^+}
\newcommand{\qnew}{q'}
\newcommand{\qold}{q}
\newsavebox{\ffbox}\newlength{\ffboxlen}
\newcommand{\todo}[1]{%
  {\sbox{\ffbox}{\textbf{TODO:}\ \textit{{#1}}\ \textbf{:ODOT}}
    \settowidth{\ffboxlen}{\usebox{\ffbox}}
		\addtolength{\ffboxlen}{-5mm}
    \ifthenelse{\ffboxlen>\linewidth}{%
      \noindent\marginpar{$>>>>$}\textbf{TODO:}\ \textit{{#1}}\ \textbf{:ODOT}\marginpar{$<<<<$}}{%
      \noindent\marginpar{$>><<$}\textbf{TODO:}\ \textit{{#1}}\ \textbf{:ODOT}}}}
\title{Crowdsourced Outcome Determination in Prediction Markets}
\author{
Rupert Freeman\\
Duke University\\
{\small\tt rupert@cs.duke.edu}
\and
S\'{e}bastien Lahaie\\
Microsoft Research\\
{\small\tt slahaie@microsoft.com}
\and
David M. Pennock\\
Microsoft Research\\
{\small\tt dpennock@microsoft.com}
}
\date{}
\begin{document}

\maketitle

\begin{abstract}
  A prediction market is a useful means of aggregating information about a future event. To function, the market needs a trusted entity who will verify the true outcome in the end. Motivated by the recent introduction of decentralized prediction markets, we introduce a mechanism that allows for the outcome to be determined by the votes of a group of arbiters who may themselves hold stakes in the market. Despite the potential conflict of interest, we derive conditions under which we can incentivize arbiters to vote truthfully by using funds raised from market fees to implement a peer prediction mechanism. Finally, we investigate what parameter values could be used in a real-world implementation of our mechanism.
\end{abstract}

\section{Introduction}
\label{sec:introduction}

Prediction markets are commonly used to elicit information about some future event. The market operates by allowing participants to buy and sell securities which pay off according to the outcome of the event, and participants with an informational edge are able to place profitable trades when the market price disagrees with their own forecast. Through this trading process, the market price can be construed as a consensus forecast of the underlying event probability. Prediction markets have proven effective at forecasting events in a variety of domains, including business and politics~\citep{spann2003internet,berg2006iowa}.

A key challenge in implementing and scaling prediction markets is the question of outcome determination (i.e., closing markets for events). Traditional prediction markets are centralized, in the sense that there exists a trusted center that creates markets, oversees transactions, and closes the market appropriately. The trusted center is a bottleneck for defining and closing markets, limiting the scope of what can be predicted. However, there has recently been a rise of interest in \emph{decentralized} prediction markets, where any user may create a market for an event. The markets are closed by consensus among a group of arbiters rather than by a center.

A decentralized platform removes the requirement for a highly trusted center, but it also allows each arbiter to directly influence the outcome of the market, in much the same way that agents may deliberately manipulate an event due to their own stake in the market; this is known as \emph{outcome manipulation} \citep{shi2009prediction,berg2006iowa,chakraborty2016trading}. Additionally, by allowing anyone to create a market, there is no longer any guarantee that all questions will be sensible, or even have a well-defined outcome.
In this paper, we propose a specific prediction market mechanism with crowdsourced outcome determination that addresses several challenges faced by decentralized markets of this sort.

First is the issue of \emph{outcome ambiguity}. At the time the market closes, it might be unreasonable to assign a binary value to the event outcome due to lack of clarity in the outcome. In a centralized market, it may be possible to postpone the market closing date to allow for rare cases of ambiguity, but it is not clear who should make such decisions in a decentralized marketplace. Therefore, it may be more fitting to allow outcomes to be non-binary, reflecting some level of disagreement. Outcomes in our mechanism are determined by the fraction of arbiters that report an event to have occurred. This also guarantees that every market is well-defined, by having traders explicitly trade on their expectations of the arbiter reports, not the actual event.

Second is the problem of \emph{peer prediction}. For the credibility of the market, it is essential that arbiters are incentivized to truthfully report their opinion as to the realized outcome. If, for instance, we reward arbiters for agreeing with the majority opinion, then they are forced to anticipate the reports of other arbiters, not their independent opinion. We address this problem by making a technical change to an existing peer prediction mechanism, the 1/prior mechanism.

Third is the inherent \emph{conflict of interest} that arises by combining prediction markets and peer prediction mechanisms. Even if arbiters can be incentivized to report truthfully in isolation, there is no way to prevent them also having a stake in the market. An arbiter holding securities that pay off in a particular event will be incentivized to report that the event has occurred, even if they do not genuinely believe it to be the case, as long as they have a chance of affecting the market outcome. We tackle this issue by charging a trading fee that is later used to pay the arbiters. We show that, as long as each agent is responsible for a limited fraction of trading, and questions are clear enough, realistic trading fees can fully subsidize truthful reporting on the part of the arbiters.

\paragraph{Related Work.}
This work is inspired by decentralized prediction markets based on crypto-currencies, including Truthcoin, Gnosis, and especially Augur~\citep{peterson2015augur}. As in Augur, our mechanism consists of a prediction market stage and an arbitration stage, with trading fees from the market stage subsidizing the arbitration. The details of the mechanisms differ in both stages, however, and Augur includes additional layers of complexity such as a reputation system.
While this complexity does provide additional protection against attack,~\citeauthor{peterson2015augur} do not obtain any theoretical guarantees or justification for their chosen parameters.
\citeauthor{clark2014decentralizing} also discuss outcome determination in crypto-based prediction markets, among several other implementation aspects.

Our work is most closely related to that of~\citet{chakraborty2016trading}, who consider a model where two agents participate in a prediction market whose outcome is determined by a vote among the two agents. Our model extends theirs by allowing an arbitrary number of traders, and not requiring that all traders are arbiters. Further, we take a mechanism design approach, obtaining an incentive compatible mechanism, while \citeauthor{chakraborty2016trading} focus on analyzing the equilibrium of a simple (to play) trading-voting game, with no peer prediction mechanism in the voting phase to incentivize truthful voting. Recent work by~\citet{witkowski2017proper} also looks at a combination of forecasting and peer prediction, although the forecasts in their paper are elicited via proper scoring rules, rather than prediction markets. 

The work of~\citet{bacon2012predicting} is similar in spirit to ours, as is the literature on outcome manipulation mentioned previously, but in all cases the concrete setting is quite different. We also draw heavily on existing literature in prediction markets~\citep{hanson2003combinatorial,chen2007utility,chen2010new} and peer prediction~\citep{miller2005eliciting,prelec2004bayesian,witkowski2012robust}; \citet{chen2010designing} survey these topics.

\section{Preliminaries}
\label{sec:preliminaries}

Let $N$ be a set of agents and let $A \subset N$ be a small set of distinct and verifiable \emph{arbiters}. Let $m=|A|$ denote the number of arbiters. The agents are anonymous in the sense that one cannot verify whether a trade is placed by an arbiter or non-arbiter, and whether several trades are placed by the same agent. Let $X$ be a binary event with some realized outcome in $\{ 0,1 \}$. We are interested in setting up a prediction market for the outcome of $X$, with the resolution of the market decided upon by the arbiters.

\paragraph{Prediction markets.}
\color{black}
We consider prediction markets implemented via a \emph{Market Scoring Rule (MSR)}, where the underlying scoring rule is strictly proper~\citep{hanson2003combinatorial,chen2007utility}. Every strictly proper MSR can be implemented as a market maker based on a \emph{convex cost function}. Under this market structure, agents trade shares of a security with a centralized market maker, who commits to quoting a buy and sell price for the security at any time. The security payout is contingent on the outcome of $X$. In the usual implementation, one share of the security pays out \$1 in the event that $X=1$, and \$0 otherwise.

Let $q$ denote the total number of outstanding shares owned by the agents (note that $q$ can be negative, in the case that more shares have been sold than bought). The market maker charges trades according to a convex, differentiable, and monotonically increasing function $C$.
An agent wishing to buy $q' - q$ securities pays $C(q')-C(q)$. Negative payments encode a transaction where securities are sold back to the market maker.
The instantaneous price of the security is given by $p = \frac{dC}{dq}$.
Because the market maker always commits to trading, it may run a loss once the outcome is realized and the securities pay out, but the loss is bounded.

In practice, the cost function is also tuned using a \emph{liquidity parameter} $b$, via the transformation $C_b(q) \equiv b\, C(q/b)$. A higher setting of $b$ results in lower price responsiveness, in the sense that the price will change less for a fixed dollar trading amount. It also results in a higher worst-case loss bound for the market maker.
Unless otherwise stated, our results assume that each agent participates in the market only once. The mechanism and results extend to situations in which agents can participate more than once, and we highlight these extensions where relevant throughout the paper.

\paragraph{Peer prediction.} Peer prediction mechanisms are designed to truthfully elicit private information from a pool of agents via a reward structure that takes advantage of information correlation across agents. After the realization of $X$, each arbiter $i$ receives either a positive or negative signal $x_i$, which we denote by $x_i=1$ and $x_i=0$ respectively. 
Let $\mu$ be the prior probability that an agent receives a positive signal. Let $\mu_1$ (resp. $\mu_0$) be the probability that, given agent $i$ receives a positive (resp. negative) signal, another randomly chosen agent also receives a positive signal.\footnote{Our analysis will assume that $\mu_1$ and $\mu_0$ are common across agents, but this is not a strict requirement. If we allow each agent to have distinct updates $\mu_1^i, \mu_0^i$, then we can let $\mu_1 = \min_i \mu_1^i$, corresponding to the minimum update given $\hat{x}_i=1$, and similarly $\mu_0= \max_i \mu_0^i$.}
In a standard peer prediction belief model, the quantities $\mu_1$ and $\mu_0$ can be calculated given $\mu$ and the signal beliefs $P(x_i=1 | X=1)$ and $P(x_i=1 | X=0)$; \cite{witkowski2014robust} provides an overview.
Assuming common information is not always reasonable, but it is natural in our setting to assume that agents take the closing price of the prediction market as their prior (if not, then they can profit in expectation by participating in the market). The probabilities $\mu_1$ and $\mu_0$ are specific to the nature of the event $X$.

 The peer prediction mechanism of interest in this work is the 1/prior (``one over prior") mechanism \citep{witkowski2014robust,jurca2008incentives,jurca2011incentives}. The 1/prior mechanism first asks each agent for their signal report $\hat{x}_i$. Then, every agent $i$ is randomly paired with a peer agent $j \neq i$, and paid
\[ u(\hat{x}_i, \hat{x}_j) = \begin{cases}
      k \mu & \text{if }\hat{x}_i = \hat{x}_j=0 \\
      k (1-\mu) & \text{if }\hat{x}_i=\hat{x}_j=1 \\
      0 & \text{if }\hat{x}_i \not= \hat{x}_j,
   \end{cases}
\]
where $k$ is some positive constant that can be freely adjusted to scale the payments received by the arbiters. Truthfully reporting $\hat{x}_i=x_i$ is an equilibrium provided that $\mu_1 \ge \mu \ge \mu_0$ \citep{frongillo2016geometric}. This is a natural condition that we will assume throughout the paper---receiving signal $x_i=1$ should not \emph{decrease} $i$'s estimate that another agent $j$ also receives signal $\hat{x}_j=1$. We also assume that at least one of the inequalities is strict, so that $\mu_1 > \mu_0$; this condition is known as \emph{stochastic relevance}. Via a simple adaptation of the corresponding proof for the 1/prior mechanism, it can be shown that truthful reporting remains an equilibrium if $\mu$ is replaced by some other constant $c$ with $\mu_0<c<\mu_1$ in the payments; we will exploit this fact to adapt the 1/prior mechanism for our purpose.

We call the quantity $\delta = \mu_1-\mu_0$ the \emph{update strength}. This quantity is specific to the instance and, roughly speaking, measures how strongly positively correlated the signals are across arbiters. The update strength is high if, after receiving a positive (negative) signal, an arbiter believes that another given arbiter receives a positive (negative) signal with high probability. For instance, if event $X$ is ``Will the Cleveland Cavaliers win the 2016 NBA playoffs?'' then we would expect $\delta \approx 1$, since any arbiter reaching a conclusion about the outcome of the series (by watching it live, reading in the news, etc.) would strongly expect any other arbiter to reach the same conclusion. On the other hand, a question like ``Will a Presidential candidate tell a lie in the televised debate?'' is considerably more open to interpretation, and we would expect it to have a smaller value of $\delta$. If an arbiter believes a candidate to have lied, it is not necessarily the case that another arbiter believes the same.


\section{Mechanism}

\begin{figure}[!t]
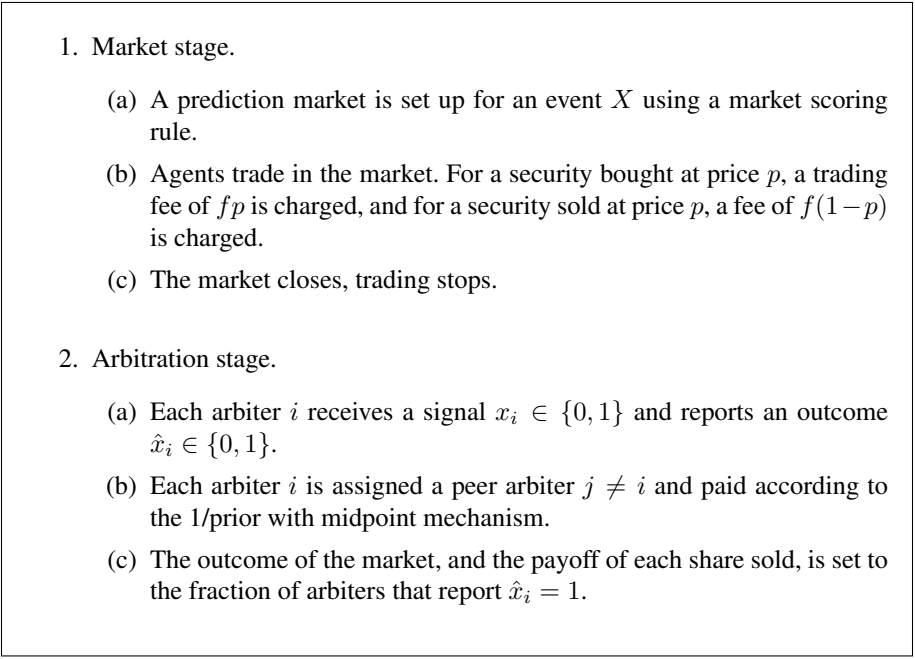

\begin{framed}
\begin{enumerate}
	\item Market stage.
	\begin{enumerate}
		\item A prediction market is set up for an event $X$ using a market scoring rule.
		\item Agents trade in the market. For a security
                  bought at price $p$, a trading fee of $fp$ is
                  charged, and for a security sold at price $p$, a
                  fee of $f(1-p)$ is charged.
		\item The market closes, trading stops.
	\end{enumerate}
	\medskip
	\item Arbitration stage.
	\begin{enumerate}
		\item Each arbiter $i$ receives a signal $x_i \in \{ 0,1 \}$ and reports an outcome $\hat{x}_i \in \{ 0,1 \}$.
		\item Each arbiter $i$ is assigned a peer arbiter $j \neq i$ and paid according to the 1/prior with midpoint mechanism.
		\item The outcome of the market, and the payoff of each share sold, is set to the fraction of arbiters that report $\hat{x}_i=1$.
	\end{enumerate}
\end{enumerate}
\end{framed}
\caption{Prediction market with outcome determined using peer prediction.}
\label{fig:pmpp}
\end{figure}

A step by step description of our mechanism is given in Figure~\ref{fig:pmpp}.
The mechanism runs a prediction market where the outcome is determined by a vote among arbiters. The arbiters' signals should be interpreted as the information they receive regarding the outcome of $X$: checking news sources, observing events, their own opinions, etc. To ensure that arbiters truthfully report their information, we incentivize them via a peer prediction mechanism.\footnote{Each arbiter makes his report without knowledge of the report of any other arbiter; for instance, the reports could be made simultaneously.} In both stages we implement non-standard versions of existing mechanisms, which we detail in the following.

\subsubsection{Market stage}

We make use of an MSR with non-binary outcome. The outcome takes a value $\hat{X} \in [0,1]$ corresponding to the fraction of arbiters that report $\hat{x}_i=1$. Each share sold pays off $\hat{X}$. Observe that this fundamentally changes the value of a security to a market participant: in a standard prediction market, an agent's value for a security is his subjective probability that event $X$ occurs, while in our market his value is the fraction of arbiters that he expects to report $\hat{x}_i=1$. However, given the agent's valuation for a security, his incentives in both markets are similar. A risk-neutral, non-arbiter agent will trade shares until the market price matches the security's expected payoff, or the agent's budget is exhausted.

This change to the payoff structure has two advantages. First, it ensures that any question has a well-defined and unambiguous outcome, avoiding problems with badly worded questions. This is important in any situation where users are allowed to generate markets. Second, any market with a binary outcome that relies on arbitration must have a point of `discontinuity', where a change in report from a single arbiter results in the value of a security changing by \$1.\footnote{To see this, consider the case where all arbiters report $\hat{x}_i=1$, and flip one report at a time to $\hat{x}_i=0$. One of these flips must change the outcome from $\hat{X}=1$ to $\hat{X}=0$.} There will therefore always be situations where, given the reports of the other arbiters, a single arbiter completely controls the market outcome. If this arbiter also has a significant stake in the market, this creates a very large incentive problem. By utilizing non-binary outcomes, a single arbiter can only change the value of each security by at most \$$1/m$.

Our mechanism imposes trading fees. Theoretical models of prediction markets do not typically incorporate trading fees (an exception is the work of \citeauthor{othman2013practical}, where a fee in the form of a bid-ask spread is used to allow liquidity to increase over time), but they are standard in real-world implementations.
%
To understand how the fee is implemented, it is important to distinguish between transactions (buy or sell) where an agent increases its position (in terms of risk), and transactions where it liquidates its position. The trading fee that we implement can be seen as a fee on the worst-case loss incurred by an agent: the fee is on $p$ when a new security is bought, and $1-p$ when a security is sold short (because it may pay out \$1). However, no fee is levied when an agent sells back a share that it holds, or buys back a share that was previously sold short---these are liquidation transactions.

%
%
The trading fee serves two distinct purposes in our mechanism. First, it allows us to raise funds which can then be used to pay arbiters. Even assuming that arbiters behave honestly (in the absence of a sophisticated peer prediction mechanism), they still need to be compensated for the time spent looking up the outcome of $X$ and reporting it to the mechanism. This can, in principle, be achieved by any of a number of fee structures.

Second, the fee provides natural bounds on the value of any given security. Even if an event is certain to occur, with a fee of $f = 2\%$ an agent who moves the market price to (say) $99\cent$ actually pays a marginal cost of $\$0.99 \cdot 1.02 >\$1$ (see Lemma~\ref{lem:min-price} for an exact bound).
The multiplicative fee effectively bounds the price of the security away from 0 and 1.
Thus, it is impossible for an agent to buy securities at an arbitrarily cheap price, which allows us to bound the number of securities, and therefore maximum payout, of any agent with a fixed budget $B$. We note that there are other reasonable fee structures which do not provide such a lower bound on the price. For example, if the agents only pay a fee on any profit they gain from the market, then the price of an event that is certain to happen will still converge to 1.

\subsubsection{Arbitration stage}

The main challenge in our setting is to incentivize arbiters to truthfully report their signal regarding the realized value of $X$. In the absence of any conflict of interest, this is a simple peer prediction problem. Since the closing price of the market gives us a natural common prior on the probability that a given arbiter receives signal $x_i=1$, it is natural to use the 1/prior mechanism. For prior signal probability $\mu$, the 1/prior mechanism uses the fact that $\mu_1 \ge \mu \ge \mu_0$ to guarantee that truthful reporting achieves higher payoff than misreporting.
However, as $\mu_1$ approaches $\mu$, the payoff for truthfully reporting signal $\hat{x}_i=1$ approaches the payoff for misreporting $\hat{x}_i=0$. In isolation, there is still no reason to misreport, but if the arbiter has some stake in the market then it may be worthwhile to incur a small misreporting loss to achieve other gains. The following example illustrates this issue.

\begin{example} \label{ex:1/p-fails}
	Consider a prediction market for the event ``Will the Democratic presidential candidate be leading the Republican presidential candidate in the polls at the end of the month?'' Suppose it is known that 10\% of arbiters only check conservative news sources, which always report that the Republican candidate is ahead, and another 10\% only check liberal news sources, which always report the opposite. Suppose the market closes at $\mu = 0.89$. Consider an arbiter $j$ who checks a (moderate) news source and finds that the Democratic candidate is ahead (i.e., $x_j=1$). Since it is still the case that 10\% of the arbiters will certainly receive signal $x_i=0$, the updated belief $\mu_1$ remains no higher than 0.9. That is, the update is very small, and the expected profit from reporting $\hat{x}_j=1$ is also small. If $j$ has bet against the outcome (i.e., sold some securities to the market maker), it could be in his interest to lie and report $\hat{x}_j=0$.

	 However, if the moderate news site had reported that the Republican candidate was leading (i.e., $\hat{x}_j=0$), the updated belief $\mu_0$ could be quite small, even in the range of 0.1 (since most arbiters check moderate sources). Now $j$ has a lot to gain from reporting $\hat{x}_j=0$. Therefore, $j$ would have to hold a relatively large number of shares for misreporting to outweigh the expected profit from the 1/prior mechanism.
	\end{example}
\noindent
	Example~\ref{ex:1/p-fails} stems from an asymmetry in update strength,  leading to potentially different incentives for arbiters depending on which signal they receive. We modify the mechanism, making the update strength symmetric. Given that we know the updated beliefs $\mu_1$ and $\mu_0$, we can pay arbiters according to the 1/prior mechanism but use the value $(\mu_1+\mu_0)/2$ instead of the prior, $\mu$. We call this the \emph{1/prior with midpoint} mechanism. Using the midpoint guarantees that the incentives for arbiters are the same regardless of the signal they receive. For the arbiter with the greatest incentive to misreport, using the 1/prior with midpoint mechanism (weakly) decreases his incentive to misreport over the standard 1/prior mechanism, allowing us to achieve better bounds in our worst-case analysis.

\subsection*{Analysis}

In this section, we derive conditions for truthful reporting ($\hat{x}_i = x_i$) to be a best response, given that all other arbiters report truthfully.
The main restriction we require is an upper bound $B$ on the total budget any given arbiter spends in the market---without such a bound, an arbiter could have an arbitrarily large incentive to manipulate the market's outcome. Thus, $B$ appears as a parameter in our analysis.

Arguably, an arbiter confident in their ability to manipulate a market outcome could procure enough funds as to have a very large budget, especially relative to a small market. However, in current decentralized prediction markets, each arbiter arbitrates only a small fraction of markets. As long as the assignment of arbiters to markets is done \emph{after} the market closes, there is no way for manipulators to target a specific market. For this reason, we believe that manipulations are most likely to be of a form where arbiters participate honestly in the first stage, but, if they happen to be assigned to arbitrate a market that they also participated in, may be able to gain by not reporting truthfully, rather than arbiters mounting deliberate high-budget attacks in the market stage. Of course, our analysis is not specific to that particular interpretation, but we do consider it a compelling argument in favor of using a budget bound in our analysis.

Intuitively, we need to scale the payments made to arbiters in the arbitration stage by a sufficiently large $k$ so that the increased payoff for truthful reporting in this stage overwhelms the gains from manipulating the outcome.

\begin{lemma} \label{lem:peer-prediction-multiplier}
	Let $n_i$ be the number of securities held by arbiter $i$. Then truthfully reporting $\hat{x}_i =x_i$ is a best response for arbiter $i$, given that all other arbiters report truthfully, if and only if
	\[ k \ge \frac{2|n_i|}{m\delta}. \]
\end{lemma}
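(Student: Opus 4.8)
The plan is to decompose arbiter $i$'s expected utility into a peer-prediction component and a market-payoff component, compare the truthful report against the single available deviation for each possible received signal, and show that requiring truthfulness to be a best response under both signals collapses to the stated inequality.

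First I would compute the peer-prediction payments. Writing $c = (\mu_1+\mu_0)/2$ for the midpoint that replaces the prior in the payment rule, and using the assumption that the peer $j$ reports truthfully, I would condition on arbiter $i$'s signal. When $x_i = 1$, the posterior that $j$ reports $1$ is $\mu_1$, so truthful reporting ($\hat{x}_i=1$) yields expected peer payment $k(1-c)\mu_1$ while deviating to $\hat{x}_i=0$ yields $kc(1-\mu_1)$; the difference is $k(\mu_1 - c) = k\delta/2$. When $x_i = 0$, the posterior that $j$ reports $1$ is $\mu_0$, and the analogous computation gives a truthful advantage of $k(c - \mu_0) = k\delta/2$. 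The crucial observation is that the midpoint makes this peer-prediction advantage equal to $k\delta/2$ for \emph{both} signals, which is precisely what the midpoint modification was designed to achieve; note that $\mu_0 < c < \mu_1$ holds by stochastic relevance, so this advantage is genuinely positive.

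Next I would account for the market payoff. Since each security pays off $\hat{X}$, the fraction of arbiters reporting $1$, arbiter $i$'s own report shifts $\hat{X}$ by exactly $1/m$: reporting $\hat{x}_i=1$ contributes $1/m$ whereas reporting $0$ contributes nothing. Holding $n_i$ securities, the payoff difference between the two reports is therefore $n_i/m$, while the expected contribution of the other arbiters' reports to $\hat{X}$ is identical under either report of $i$ and cancels in the comparison. Combining the two components, the net expected gain from truthful reporting is $k\delta/2 + n_i/m$ when $x_i = 1$ (deviating to $0$ lowers $\hat{X}$, which hurts a long position but tempts a short one) and $k\delta/2 - n_i/m$ when $x_i = 0$ (deviating to $1$ raises $\hat{X}$, tempting a long position).

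Finally, truthful reporting is a best response exactly when both gains are nonnegative, i.e.\ $k\delta/2 \ge -n_i/m$ and $k\delta/2 \ge n_i/m$, which together are equivalent to $k \ge 2|n_i|/(m\delta)$, establishing both the sufficiency and necessity claimed by the ``if and only if.'' I expect the main point requiring care to be the bookkeeping in the first step: one must use the posterior beliefs $\mu_1,\mu_0$ rather than the prior $\mu$ for the peer's report, track which signal produces the binding constraint for long versus short holdings, and verify that the midpoint symmetrizes the two cases so that a single threshold on $k$ simultaneously deters both possible deviations.
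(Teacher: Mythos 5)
Your proof is correct and follows essentially the same approach as the paper: condition on the arbiter's signal, compute the expected peer-prediction advantage of truthfulness under the midpoint payments (which equals $k\delta/2$ for either signal) and the market payoff shift of $n_i/m$ from the arbiter's own report, then require the net truthful gain to be nonnegative. The only cosmetic difference is that the paper fixes $n_i>0$ and argues that the binding deviation is reporting $\hat{x}_i=1$ when $x_i=0$, whereas you treat both signals symmetrically and intersect the two constraints, which yields the same inequality without the case split.
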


\begin{proof}
	We prove the case where $n_i>0$; the case for $n_i<0$ is symmetric.
	The total payoff for arbiter $i$ is the sum of the payoffs from the market phase and the arbitration phase. Fixing the reports of the other arbiters, the market payout for $i$ is higher when $i$ reports $\hat{x}_i=1$. And, in expectation, the payoff for $i$ in the arbitration phase is higher for truthful reporting than for lying. Thus, the only problematic case is when $x_i=0$, but $i$ may wish to report $\hat{x}_i=1$.

	So suppose that $x_i=0$. The expected payoff for truthfully reporting $\hat{x}_i=0$, assuming all other arbiters truthfully report their signal, is
	\begin{equation} \label{eqn:truthful-payoff}
	n_i\mu_0\frac{m-1}{m} + (1-\mu_0) k \, \frac{\mu_0+\mu_1}{2}.
	\end{equation}
	Here $\mu_0(m-1)$ is the expected number of arbiters that report signal $1$, and therefore $n_i\mu_0 (m-1)/m$ is $i$'s expected payoff from the market, while the remaining term is $1-\mu_0$, the probability of $i$'s peer agent also reporting 0, multiplied by the payment $i$ receives in this case.
	On the other hand, the expected payoff for misreporting $\hat{x}_i=1$ is
	\begin{equation} \label{eqn:misreport-payoff}
		n_i\left( \mu_0\frac{m-1}{m}+\frac{1}{m} \right) + \mu_0 \, k \left( 1-\frac{\mu_0+\mu_1}{2} \right),
		\end{equation}
	where the extra $1/m$ in the first term is due to the additional market payoff from $i$ reporting $\hat{x}_i=1$, and the latter term is now the probability of $i$'s peer agent reporting 1, multiplied by the payoff $i$ receives when this happens.

	We require that the expected payoff for reporting $\hat{x}_i=1$ be at most the expected payoff for truthfully reporting $\hat{x}_i=0$. Setting term~(\ref{eqn:misreport-payoff}) to be at most term~(\ref{eqn:truthful-payoff}) and simplifying yields the result.
\end{proof}
\noindent
This characterization requires an upper bound on the number of securities that any single agent owns. In itself this is an unsatisfying restriction; however, we can think about it in terms of the size of the fee, $f$, and the amount of money that any single arbiter spends in the market, $B$.
For fixed fee $f$, let $\qmin$ and $\qmax$ be the number of outstanding securities such that the market prices are $p(\qmin) = f / (1+f)$ and $p(\qmax) = 1/(1+f)$ respectively. Note that these quantities depend on the liquidity parameter $b$ used in the cost function.

\begin{lemma} \label{lem:min-price}
	For fixed percentage fee $f$, the number of outstanding securities lies in the interval $[\qmin,\qmax]$.
\end{lemma}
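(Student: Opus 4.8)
The plan is to argue that a rational trader never executes a position-increasing trade that would push the total outstanding position outside $[\qmin,\qmax]$, because the multiplicative fee makes the marginal cost of such a trade exceed the maximum possible payoff of the security. The two ingredients I would use are (i) the marginal cost of a trade \emph{including} the fee, and (ii) the fact that each share pays off $\hat{X}\in[0,1]$, hence at most $\$1$. Since the price $p(q)$ is a function of the aggregate outstanding position $q$, it suffices to show that at either endpoint the marginal trade that would cross it is unprofitable for \emph{any} trader, regardless of valuation.

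For the upper endpoint, at outstanding position $q$ the instantaneous price is $p(q)=\frac{dC}{dq}$, so buying an additional (infinitesimal) share costs $p(q)$ for the share plus a fee of $f\,p(q)$, for a total marginal cost of $(1+f)\,p(q)$. Because the share pays at most $\$1$, even a trader who is certain the maximal outcome $\hat{X}=1$ will occur gains nothing by buying once $(1+f)\,p(q)>1$. Thus buying occurs only while $p(q)\le 1/(1+f)=p(\qmax)$, and since $C$ is convex the price $p(q)$ is monotonically increasing in $q$, so the aggregate position can never be driven above $\qmax$.

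The lower endpoint follows by a symmetric argument that I would phrase via a complementary security. Selling the security short at price $p(q)$ is equivalent to buying the complementary security, which pays $1-\hat{X}\in[0,1]$, at price $1-p(q)$; the sell fee $f(1-p(q))$ is exactly $f$ times this complementary price, so the marginal cost of shorting one share is $(1+f)\,(1-p(q))$. As the complementary payoff is also at most $\$1$, shorting occurs only while $(1+f)(1-p(q))\le 1$, i.e. $p(q)\ge f/(1+f)=p(\qmin)$, so the position never falls below $\qmin$.

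The one point requiring care—and the step I expect to be the main obstacle—is distinguishing position-increasing trades from liquidations. Only position-increasing trades (buying to lengthen a long position, or short-selling to deepen a short) incur the fee, and these are precisely the trades that can push $q$ toward an extreme; liquidation trades carry no fee but move $q$ back toward the interior, so they cannot cross either endpoint. Combining the two bounds, any trade that would carry $q$ above $\qmax$ or below $\qmin$ has marginal cost strictly exceeding the maximal attainable payoff, so no rational agent makes it, and the aggregate position therefore remains in $[\qmin,\qmax]$.
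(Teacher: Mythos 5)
Your core argument coincides with the paper's: compare the marginal cost of a fee-bearing trade against the maximal \$1 payoff at the two endpoints. For buying, the total cost $(1+f)\,p(q)$ reaches \$1 exactly at $p(\qmax)=1/(1+f)$, which is the paper's upper-bound argument verbatim; for shorting, the paper computes that a short sale at $p(\qmin)=f/(1+f)$ nets the seller exactly zero (proceeds $f/(1+f)$ minus fee $f\bigl(1-\tfrac{f}{1+f}\bigr)=\tfrac{f}{1+f}$) while leaving him exposed to the payout, which is the same computation you package as ``buying the complementary security at total cost $(1+f)(1-p(q))$.'' Up to this point the two proofs are essentially identical.

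The step that fails is your handling of liquidation trades. You claim that liquidation trades ``move $q$ back toward the interior, so they cannot cross either endpoint.'' That is false: the direction in which a trade moves $q$ is determined by whether it is a buy or a sell, not by whether it opens or closes a position. A liquidation sale (an agent selling back shares he already holds) \emph{decreases} $q$, pays no fee, and yields the seller positive revenue while shedding his exposure, so it can be rational precisely when $q=\qmin$. Concretely: an agent buys $n$ shares early; other agents then short (paying fees) until $q=\qmin$; the first agent, now believing the security nearly worthless, sells his $n$ shares back at no fee, pushing $q$ to $\qmin-n$. Your argument does not exclude this, and indeed the lemma as literally stated can be violated this way if agents may trade more than once. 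What actually closes the hole is the paper's standing assumption, stated in the Preliminaries, that each agent participates in the market only once: under that assumption every trade starts from a zero position, hence is position-increasing and fee-bearing, and liquidation trades simply never occur. Replacing your ``toward the interior'' claim with an appeal to that assumption makes your proof correct; as written, that step is a genuine gap.
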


\begin{proof}
	Suppose that some agent sells a security when there are already $\qmin$ outstanding. Then the marginal price is exactly $f/(1+f)$. When selling a security at this price, the agent receives $f/(1+f)$ from the mechanism but must pay a trading fee of $$f\left( 1-\frac{f}{1+f} \right) = \frac{f}{1+f}.$$ Thus the agent's net revenue from the sale is 0 (and the possibility remains that he must pay the mechanism in the event that $X$ occurs). Therefore no agent makes such a sale, and the number of outstanding securities never drops below $\qmin$.

	A similar argument shows that $q$ never exceeds $\qmax$. To buy a security when there are already $\qmax$ outstanding, an agent must pay a price of at least \$1, when the fee is included. 
\end{proof}

\noindent
Lemma~\ref{lem:min-price} provides us with the minimum and maximum number of outstanding securities at any time. As a corollary, we can derive the maximum number of securities that a single agent with budget $B$ is able to purchase or short sell. We interpret the budget as an upper bound on the worst-case loss that the agent is able to incur. When buying a security for price $p$, the worst-case loss is $p$, under outcome $X=0$. When selling a security for price $p$, the worst-case loss is $1-p$, under outcome $X=1$. Let $\phi_b^+(B)=C^{-1}_b(B + C_b(\qmin))-\qmin$. Define $q'$ implicitly by $B + C_b(q^+) - C_b(q') = q^+ - q'$, and let $\phi_b^-(B) = q'-\qmax$.

\begin{corollary} \label{cor:max-number-securities}
	At the end of the market stage, an agent $i$ with budget $B$ holds $n_i \in [\phi_b^-(B), \phi_b^+(B)]$ securities.
\end{corollary}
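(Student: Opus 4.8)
The plan is to derive this directly from Lemma~\ref{lem:min-price}, which pins the market state to the interval $[\qmin, \qmax]$, together with a single convexity observation about where a budget-constrained agent can most profitably trade. Since the buying ($n_i > 0$) and short-selling ($n_i < 0$) cases are symmetric, I would prove the upper bound $n_i \le \phi_b^+(B)$ in full and then mirror the argument for the lower bound. The one fact doing the real work is that, for a fixed trade size $n > 0$, the quantity $C_b(q_0 + n) - C_b(q_0)$ is nondecreasing in the starting state $q_0$, which is immediate from convexity of $C_b$ (equivalently, monotonicity of the price $p = C_b'$, so that every marginal share costs weakly more when one starts further to the right).

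For the upper bound, suppose $i$ acquires $n_i > 0$ shares by moving the market from some admissible state $q_0$ to $q_0 + n_i$. Integrating the per-share worst-case loss of $p$ incurred under outcome $X = 0$, the agent's total worst-case loss is $C_b(q_0 + n_i) - C_b(q_0)$, which the budget forces to be at most $B$; hence $n_i \le C_b^{-1}(B + C_b(q_0)) - q_0$. By the convexity fact above, the right-hand side is nonincreasing in $q_0$, so it is largest at the smallest admissible starting point, which by Lemma~\ref{lem:min-price} is $q_0 = \qmin$. Since $C_b$ is strictly increasing on the relevant range (the price there is bounded below by $f/(1+f) > 0$) and hence invertible, this yields $n_i \le C_b^{-1}(B + C_b(\qmin)) - \qmin = \phi_b^+(B)$.

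The lower bound is symmetric. If $i$ short sells down to $q' = q_0 - |n_i|$, integrating the per-share worst-case loss of $1 - p$ incurred under $X = 1$ gives a total worst-case loss of $(q_0 - q') - (C_b(q_0) - C_b(q'))$, which must not exceed $B$. By the same convexity fact the sale revenue $C_b(q_0) - C_b(q')$ is largest, and the loss therefore smallest, when $q_0$ is as large as possible, namely $q_0 = \qmax$ by Lemma~\ref{lem:min-price}. Setting the worst-case loss equal to the budget at $q_0 = q^+ = \qmax$ recovers precisely the defining relation $B + C_b(q^+) - C_b(q') = q^+ - q'$, so the most negative attainable holding is $n_i = q' - \qmax = \phi_b^-(B)$. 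The only genuinely delicate step is the monotonicity-in-$q_0$ claim that justifies pushing the agent's trade to the cheapest (resp.\ most lucrative) end of $[\qmin, \qmax]$; once that is in hand, both bounds reduce to a substitution into the definitions. It is also worth checking for consistency that these budget-driven bounds never conflict with Lemma~\ref{lem:min-price} — beyond $\qmax$ (resp.\ below $\qmin$) the fee already makes further trading unprofitable — so the binding constraint at the extremes is exactly the one we use.
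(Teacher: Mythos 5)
Your proof is correct and follows essentially the same route as the paper's: use Lemma~\ref{lem:min-price} to identify $\qmin$ (resp.\ $\qmax$) as the extremal starting state, equate the worst-case loss of a buy ($C_b(q_0+n_i)-C_b(q_0)$) or short sale ($(q_0-q')-(C_b(q_0)-C_b(q'))$) with the budget $B$, and read off $\phi_b^+(B)$ and $\phi_b^-(B)$ from the definitions. The only difference is that you explicitly justify, via convexity of $C_b$, why the extreme starting states are the worst case --- a step the paper asserts without argument --- which is a welcome tightening rather than a departure.
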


\begin{proof}
	We first show the upper bound. Given an existing number of outstanding securities, $\qold$, an agent is able to increase the number of outstanding securities to $\qnew$, where 
	\begin{equation} \label{eq:1}
	C_b(\qnew)-C_b(\qold)=B.
\end{equation}
For a fixed budget, the maximum number of securities that can be bought in a single transaction is in the case that $\qold$ is as small as possible; in our case, $\qold=\qmin$. Substituting into~\eqref{eq:1} gives
\[ \qnew = C^{-1}_b(B + C_b(\qmin)). \]
The number of securities held by $i$ is $\qnew-\qold=\qnew-\qmin$, which gives the upper bound in the statement of the corollary.

We now show the lower bound. Given an existing number of securities $\qold$, an agent is able to decrease the number of outstanding securities to $\qnew$, where 
	\begin{equation} \label{eq:2}
	B + C_b(\qold) - C_b(\qnew) = \qold - \qnew.
\end{equation}
The right hand side of~\eqref{eq:2} is the number of securities sold by the agent to the mechanism, and therefore the amount that he may be required to pay the mechanism in the case that $X=1$. The left hand side is exactly the funds that the agent is able to use to reimburse the mechanism: his budget, $B$, plus the amount paid to the agent by the mechanism for the securities, $C(\qold) - C(\qnew)$. The maximum number of securities that can be sold in a single transaction is in the case that $\qold=\qmax$; making this substitution in~\eqref{eq:2} yields the implicit formula for $q'$ in the definition of $\phi_b^-(B)$. The number of securities sold by $i$ is $\qnew-\qold=\qnew-\qmax$, which gives the lower bound in the statement of the corollary.
\end{proof}
\noindent
An interesting special case is the limit as $b \to \infty$. This corresponds to the market having zero price responsiveness, meaning that all securities are purchased at a fixed price. Conceptually, it is equivalent to the situation where agents participate in the market more than once. In that setting, an agent could wait until the market price reaches $\frac{f}{1+f}$, buy a small number of securities, then wait again until the price drops. An agent spending all their budget in this way can, in the extreme case, buy as if the market has infinite liquidity.

\begin{corollary} \label{cor:infinite-liquidity-max-number-securities}
	For an agent that spends at most $B$ dollars in a market with trading fee $f$ and infinite liquidity, $n_i$ lies in the range $\left[ -\frac{B(1+f)}{f}, \frac{B(1+f)}{f} \right]$.
\end{corollary}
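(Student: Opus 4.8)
The plan is to work directly in the infinite-liquidity regime rather than grind through the implicit formulas of \coref{cor:max-number-securities}. The structural fact that drives everything is that as $b \to \infty$ the price is completely insensitive to the agent's own trades, so we may treat every security the agent acquires as bought (or sold short) at a single prevailing price $p$, and by \lemref{lem:min-price} that price is confined to $[f/(1+f),\, 1/(1+f)]$. Combined with the worst-case-loss interpretation of the budget $B$, this reduces the statement to a per-security accounting argument.

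First I would treat the upper bound. Consider an agent who only buys, since any liquidating sale merely shrinks the net position, so the extreme of $n_i$ is reached by pure buying. Each purchased security is bought at some price $p \ge f/(1+f)$, and under the outcome $X=0$ it pays nothing, so it contributes at least $f/(1+f)$ to the worst-case loss. Summing over the $n_i$ securities, the worst-case loss is at least $n_i \cdot f/(1+f)$, which must not exceed $B$; rearranging gives $n_i \le B(1+f)/f$. This is attained in the limit because, with the price pinned at its minimum $f/(1+f)$ and unresponsive to trading, every unit is bought at exactly that price.

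The lower bound is symmetric. An agent who only sells short sells each security at a price $p \le 1/(1+f)$; under $X=1$ that position must pay out \$1, for a worst-case loss of $1-p \ge f/(1+f)$ per security. The same accounting yields $|n_i| \le B(1+f)/f$, hence $n_i \ge -B(1+f)/f$, approached as the price sits at its maximum $1/(1+f)$. Note the pleasing symmetry that at either price extreme the worst-case loss per security is exactly $f/(1+f)$, which is what makes the two bounds coincide.

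The one step that needs care---and the only place I expect genuine subtlety---is justifying the passage to infinite liquidity, namely that the agent can transact an unbounded quantity while the price stays pinned at a single extreme. The cleanest way to make this rigorous is to recover the bounds as the limits $\lim_{b\to\infty}\phi_b^+(B)$ and $\lim_{b\to\infty}\phi_b^-(B)$ of the finite-liquidity range from \coref{cor:max-number-securities}. Writing $C_b(q)=b\,C(q/b)$ and $\qmin = b\,z$ with $C'(z)=f/(1+f)$, the rescaled defining equation $C(q^*/b)=C(z)+B/b$ expands to first order as $q^*/b = z + (B/b)/C'(z) + o(1/b)$, so the $\qmin$ terms cancel and $\phi_b^+(B)=q^*-\qmin \to B/C'(z) = B(1+f)/f$; an identical computation at the upper price $1/(1+f)$ gives $\phi_b^-(B)\to -B(1+f)/f$. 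This confirms both that the direct accounting is tight and that $B(1+f)/f$ is precisely the limiting extreme of the range.
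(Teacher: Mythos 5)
Your core argument---using \lemref{lem:min-price} to bound the per-security worst-case loss below by $f/(1+f)$ at both price extremes (cost $p \ge f/(1+f)$ when buying, loss $1-p \ge f/(1+f)$ when shorting) and dividing the budget $B$ by this quantity---is exactly the paper's proof. Your final paragraph's rigorous $b\to\infty$ limit of $\phi_b^{\pm}(B)$ is a correct but unnecessary addition; the paper simply argues directly at the fixed prevailing price.
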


\begin{proof}
	The minimum price for a single security is $\frac{f}{1+f}$, by Lemma~\ref{lem:min-price} and the definition of $\qmin$. Therefore, an agent with budget $B$ can buy at most $\frac{B(1+f)}{f}$, the upper bound in the corollary statement.

	The maximum price for a single security is $\frac{1}{1+f}$, by Lemma~\ref{lem:min-price} and the definition of $\qmax$. Thus, an agent selling a security has worst case loss at least $1-\frac{1}{1+f}=\frac{f}{1+f}$. So, an agent with budget $B$ can sell at most $\frac{B(1+f)}{f}$ securities, which yields the lower bound.
\end{proof}

\noindent
%
If every agent has budget at most $B$ in the market stage, we can combine the bounds from Corollaries~\ref{cor:max-number-securities} and~\ref{cor:infinite-liquidity-max-number-securities} and Lemma~\ref{lem:peer-prediction-multiplier} to determine the minimum payment that guarantees truthful reporting in the arbitration phase.

\begin{theorem} \label{thm:peer-prediction-multiplier}
	Given that all other arbiters report truthfully, truthful reporting is a best response for arbiter $i$ if
	\[ k \ge \frac{2 \max \{|\phi_b^-(B)|,|\phi_b^+(B)|\}}{m\delta}. \]
	In the case that agents may participate in the market many times, truthful reporting requires that
	\[ k \ge \frac{2B(1+f)}{fm\delta}. \]
\end{theorem}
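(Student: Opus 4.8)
The plan is to recognize that this theorem is a composition of the three preceding results, so the proof amounts to substituting the worst-case security bounds into the tight threshold from Lemma~\ref{lem:peer-prediction-multiplier}. The single idea that makes the substitution valid is \emph{monotonicity}: the threshold $\frac{2|n_i|}{m\delta}$ appearing in that lemma is increasing in $|n_i|$, so a choice of $k$ that dominates the threshold at the largest admissible value of $|n_i|$ automatically dominates it at every smaller value.

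First I would fix an arbiter $i$ and note that the mechanism designer does not know in advance how many securities $i$ will hold; the guarantee must therefore hold uniformly over every position $n_i$ that $i$ could end up with given budget $B$. By Corollary~\ref{cor:max-number-securities}, any such position satisfies $n_i \in [\phi_b^-(B), \phi_b^+(B)]$, and hence $|n_i| \le \max\{|\phi_b^-(B)|, |\phi_b^+(B)|\}$. Substituting this worst-case magnitude into the condition of Lemma~\ref{lem:peer-prediction-multiplier} and invoking monotonicity, any $k$ with
\[ k \ge \frac{2\max\{|\phi_b^-(B)|, |\phi_b^+(B)|\}}{m\delta} \]
satisfies $k \ge \frac{2|n_i|}{m\delta}$ for every realizable $n_i$. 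By the lemma, truthful reporting is then a best response, which establishes the first bound.

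For the many-participation regime I would run the identical argument but draw the position bound from Corollary~\ref{cor:infinite-liquidity-max-number-securities} instead. There the admissible interval is symmetric, $n_i \in \left[-\frac{B(1+f)}{f}, \frac{B(1+f)}{f}\right]$, so the maximum over the two endpoints collapses to $\frac{B(1+f)}{f}$. Substituting into Lemma~\ref{lem:peer-prediction-multiplier} yields the claimed bound $k \ge \frac{2B(1+f)}{fm\delta}$, and since this interval dominates the finite-liquidity interval of Corollary~\ref{cor:max-number-securities}, the infinite-liquidity bound is also a safe (if looser) choice whenever an agent may trade repeatedly.

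The content here lives almost entirely in the earlier lemmas, so I do not expect a genuine obstacle in the calculation. The one point requiring care is that Lemma~\ref{lem:peer-prediction-multiplier} is phrased as an \emph{if and only if} for a \emph{fixed} $n_i$, whereas the theorem asks for a single $k$ that works regardless of the arbiter's realized holdings and signal; the bridge is precisely the monotonicity of the threshold together with the two-sided bounds on $|n_i|$ supplied by the corollaries. I would also make explicit that no further reasoning about the arbiter's trading behavior is needed beyond the budget cap $B$, since the corollaries already translate that cap into a bound on $|n_i|$.
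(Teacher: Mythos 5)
Your proposal is correct and follows essentially the same route as the paper, whose proof is a one-line substitution of the bounds from Corollary~\ref{cor:max-number-securities} and Corollary~\ref{cor:infinite-liquidity-max-number-securities} into Lemma~\ref{lem:peer-prediction-multiplier}. Your additional remarks on monotonicity of the threshold in $|n_i|$ and on needing the bound uniformly over all realizable positions simply make explicit what the paper leaves implicit.
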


\begin{proof}
	The theorem follows directly from substituting the lower bound on $n_i$ from Corollary~\ref{cor:max-number-securities} and Corollary~\ref{cor:infinite-liquidity-max-number-securities} into the inequality from Lemma~\ref{lem:peer-prediction-multiplier}.
\end{proof}

\noindent
Therefore, fixing an agent budget $B$ and a trading fee $f$, we know how large one needs to make the payments in the arbitration phase in order to incentivize truthful reporting. We now take a global view, and examine the total funds required to incentivize all arbiters to report truthfully.

\begin{lemma} \label{lem:total-payment-bound}
	The total payment made to the arbiters is at most $m k$. We can implement a truthful equilibrium with total payment at most
	\[ \frac{2 \max \{|\phi_b^-(B)|,|\phi_b^+(B)|\}}{\delta}. \]
	In the case that agents may participate in the market many times, we require total payment at most
	\[ \frac{2B(1+f)}{f\delta}. \]
\end{lemma}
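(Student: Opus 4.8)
The plan is to treat this statement as a corollary of Theorem~\ref{thm:peer-prediction-multiplier} together with a direct worst-case bound on the arbitration-stage payout. The three displayed claims are really two separate facts: first, a distribution-free upper bound of $mk$ on the total amount ever paid to the arbiters; and second, the two numerical bounds, obtained by substituting the smallest admissible value of $k$ into this $mk$ bound for each of the two participation regimes.

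First I would establish the $mk$ bound. Under the 1/prior with midpoint mechanism the prior $\mu$ is replaced by $c = (\mu_0+\mu_1)/2$, so each arbiter $i$ receives either $kc$ (if $i$ and its peer both report $0$), $k(1-c)$ (if both report $1$), or $0$ (if they disagree). Since $\mu_0,\mu_1 \in [0,1]$ we have $c \in [0,1]$, and hence $kc \le k$ and $k(1-c) \le k$. Thus every arbiter receives at most $k$ regardless of the realized reports, and summing over the $m$ arbiters gives a worst-case total payment of at most $mk$. The point requiring care here is that this bound must hold for every report profile, since the actual total is a random quantity; taking the maximum over all possible pairings and reports is what delivers the clean bound.

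Next I would instantiate $k$ at the minimum value that guarantees truthful reporting. Theorem~\ref{thm:peer-prediction-multiplier} shows that truthful reporting is a best response for every arbiter provided $k \ge 2\max\{|\phi_b^-(B)|,|\phi_b^+(B)|\}/(m\delta)$ in the single-participation case, because $\max\{|\phi_b^-(B)|,|\phi_b^+(B)|\}$ uniformly bounds $|n_i|$ for every arbiter with budget $B$. Setting $k$ to this value and multiplying by $m$ cancels the factor of $m$ and yields the first stated bound, $2\max\{|\phi_b^-(B)|,|\phi_b^+(B)|\}/\delta$. The repeated-participation bound follows identically by instead substituting $k = 2B(1+f)/(fm\delta)$ from the second part of Theorem~\ref{thm:peer-prediction-multiplier}.

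I do not expect a genuine obstacle, since the heavy lifting was already done in Lemma~\ref{lem:peer-prediction-multiplier}, Corollaries~\ref{cor:max-number-securities} and~\ref{cor:infinite-liquidity-max-number-securities}, and Theorem~\ref{thm:peer-prediction-multiplier}. The only subtlety worth flagging is that a single choice of $k$ must simultaneously make truthful reporting a best response for \emph{all} arbiters, not merely one; this is precisely why the $\max$ over the budget-induced holding bounds appears, and it is already built into the statement of Theorem~\ref{thm:peer-prediction-multiplier} that I am invoking.
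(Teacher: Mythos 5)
Your proof matches the paper's own argument: the paper likewise bounds each arbiter's payment by $k$ (since the midpoint $(\mu_0+\mu_1)/2$ lies in $[0,1]$), sums to get $mk$, and then substitutes the minimum admissible $k$ from Theorem~\ref{thm:peer-prediction-multiplier} so that the factor of $m$ cancels. Your write-up is correct and, if anything, slightly more explicit about the per-report-profile worst case and the uniformity of $k$ across arbiters than the paper's two-line proof.
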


\begin{proof}
As $0 \leq \mu_0, \mu_1 \leq 1$, their mean also lies between 0 and 1, and therefore each arbiter's payment in the 1/prior with midpoint mechanism is at most $k$.  Thus the total payment to the arbiters is at most $m k$, which proves the first part.
Combining this with the bounds on $k$ from Theorem~\ref{thm:peer-prediction-multiplier} yields the second part.
	\end{proof}

\noindent
Now that we have an expression for the total amount needed to pay the arbiters, we can determine a suitable value for the fee $f$ so that the mechanism does not need any outside subsidy to finance these payments. Let $c_i$ denote the total cost paid by agent $i$ to the mechanism (so $c_i$ is negative if agent $i$ sells securities). Define $M$ by
\[ M = \sum_{i: n_i >0} c_i + \sum_{i: n_i < 0} (n_i+c_i). \]
$M$ can be interpreted as the sum of the worst-case losses of the agents.
By definition, the total fee revenue collected by the mechanism is $fM$. The mechanism is guaranteed to generate enough fees to incentivize truthful reporting if the revenue is at least as large as the total payment required for the arbiters. We state this result as a theorem.

\begin{theorem} \label{thm:main}
The mechanism generates enough fee revenue to pay the arbiters without requiring any outside subsidy if
\begin{equation} \label{eq:main}
fM \ge \frac{2 \max \{|\phi_b^-(B)|,|\phi_b^+(B)|\}}{\delta}.
\end{equation}
If agents may participate in the market many times, then we require that
\begin{equation} \label{eq:main2}
fM \ge \frac{2B(1+f)}{f\delta}.
\end{equation}
\end{theorem}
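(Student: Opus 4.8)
The plan is to treat this theorem as the capstone of the preceding chain of lemmas: the minimum total arbiter payment needed to implement a truthful equilibrium was already pinned down in Lemma~\ref{lem:total-payment-bound}, so all that remains is to confirm that the fees the market actually collects cover it. The decisive ingredient is the revenue identity asserted just before the statement, namely that the mechanism's total fee revenue equals $fM$. I would prove this identity first, since once it is in hand the theorem is a one-line comparison.

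To establish that the fee revenue equals $fM$, I would compute the fees paid by a single agent $i$ directly from the schedule in Figure~\ref{fig:pmpp} and show they equal $f$ times $i$'s worst-case loss. For an agent who ends long ($n_i > 0$), each purchased security incurs a fee $fp$, and integrating the instantaneous price over the purchase gives $f\int p\,dq = f\bigl(C_b(q_{\text{new}}) - C_b(q_{\text{old}})\bigr) = f c_i$; since $c_i$ is exactly this agent's worst-case loss (realized when $\hat X = 0$), the fee is $f$ times that loss. For an agent who ends net short, the $f(1-p)$ fee on each sold security integrates to $f$ times the number of shares sold minus $f\int p\,dq$, which is precisely $f$ times the short seller's worst-case loss (realized when $\hat X = 1$). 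Summing these per-agent contributions reproduces exactly the aggregate appearing in the definition of $M$, so the collected fees total $fM$. The one point requiring care is the multiple-participation case: here I would invoke the observation in the mechanism description that liquidation trades (selling back held shares, or buying back shorted shares) carry neither a fee nor any worst-case exposure, so only the net risk-increasing activity---summarized by the pair $(n_i, c_i)$---contributes, and the identity survives unchanged.

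With the revenue identity in place, the conclusion is immediate. By Lemma~\ref{lem:total-payment-bound}, a truthful equilibrium can be implemented with total arbiter payment at most $2\max\{|\phi_b^-(B)|, |\phi_b^+(B)|\}/\delta$ (and at most $2B(1+f)/(f\delta)$ when agents may trade repeatedly). The mechanism avoids any outside subsidy exactly when its revenue $fM$ is at least this payment, and imposing that inequality yields~\eqref{eq:main} and~\eqref{eq:main2} respectively. I expect the only genuine obstacle to be the revenue identity---specifically the bookkeeping for short sales and for repeated participation---whereas the final comparison against the Lemma~\ref{lem:total-payment-bound} bound is pure substitution.
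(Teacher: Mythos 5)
Your proposal is correct and follows essentially the same route as the paper: the paper treats this theorem as an immediate consequence of Lemma~\ref{lem:total-payment-bound} together with the assertion (made just before the statement) that total fee revenue equals $fM$ by definition of $M$ and the fee schedule. Your per-agent verification of that revenue identity---fees on purchases integrating to $fc_i$, fees on short sales to $f$ times the short seller's worst-case exposure, with liquidation trades contributing nothing---is detail the paper leaves implicit, and it matches the intended reading of $M$ as the sum of worst-case losses.
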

\noindent
Observe that inequality~\eqref{eq:main2} aligns with intuition. An increase in total trader spend $M$, or the trading fee $f$, makes it easier to incentivize the arbiters to report truthfully since the market collects more revenue. Likewise, an increase in $\delta$ helps us satisfy the inequality, since a large update strength increases the incentive for arbiters to report truthfully to the peer prediction mechanism. However, a large value of $B$ increases the stake that any single arbiter can have in the market, which in turn increases their payoff for misreporting.

An interesting feature of inequalities~\eqref{eq:main} and~\eqref{eq:main2} is the lack of any dependence on the number of arbiters $m$. One might expect that increasing the number of arbiters would be beneficial, since this reduces the influence that any one of them has on the market outcome. However, this is canceled out by the fact that as we add arbiters, the payment per arbiter decreases, so that we cannot incentivize them as strongly.

\section{Parameter Calibration}

\begin{figure*}[!t]
	\centering
	\begin{subfigure}{0.49\textwidth}
		\centering
		{\includegraphics[width=\columnwidth]{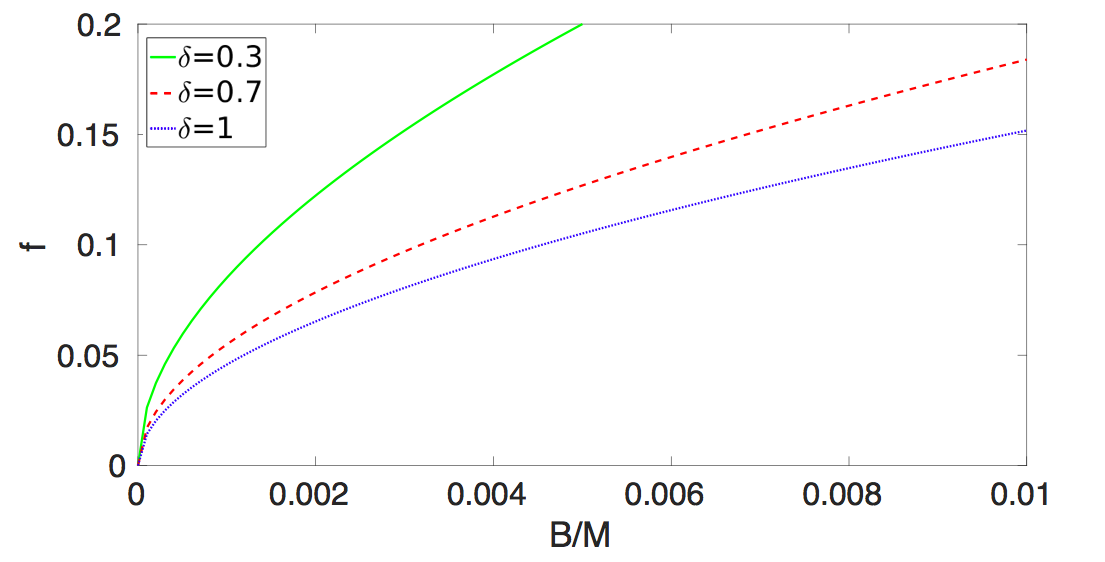}}
		\caption{Multiple entry case. \label{fig:fvsB}}
	\end{subfigure}
	\begin{subfigure}{0.49\textwidth}
		\centering
		\includegraphics[width=\columnwidth]{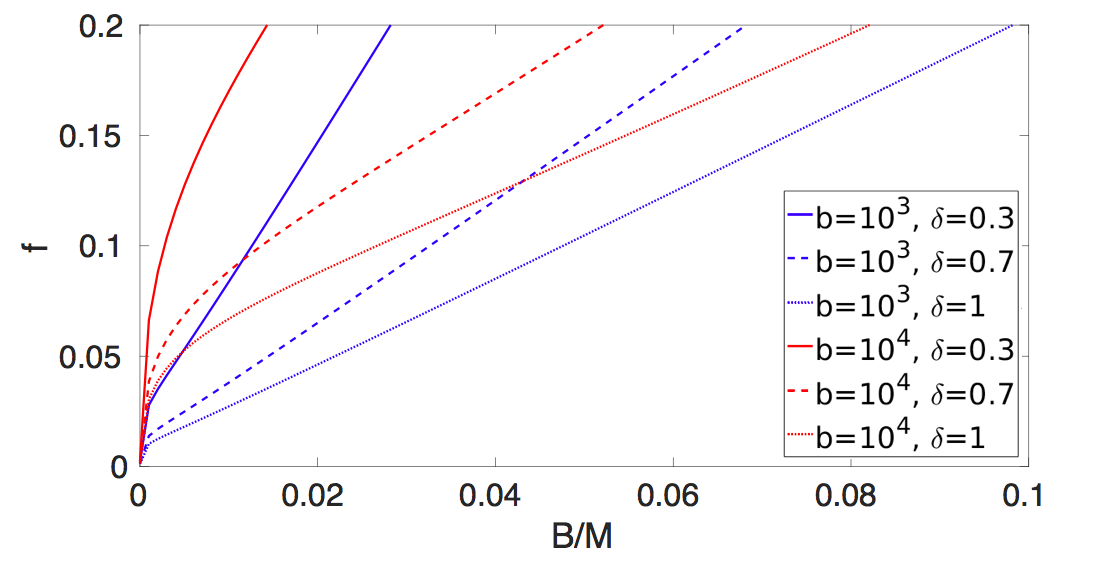}
		\caption{Single entry case. \label{fig:main}}
	\end{subfigure}
\caption{Minimum fee $f$ required to adequately incentivize arbiters, plotted as a function of $\frac{B}{M}$. In both cases, $M = 10^6$ is fixed. Relationships are shown for selected values of update strength $\delta$ and, in the right-hand plot, liquidity $b$.}
\label{fig:simulations}
\end{figure*}

In this section we investigate the constraints imposed by inequalities~\eqref{eq:main} and~\eqref{eq:main2}. The purpose of the exercise is to illustrate how Theorem~\ref{thm:main} can be used to inform the choice of fee $f$, and to confirm that realistic fees could be charged in practice to subsidize truthful arbitration. We consider the logarithmic market scoring rule (LMSR), which is the most common MSR used in practice. For the LMSR, the cost and price functions are
\begin{align*}
	C_b(q)=b \log(1+e^{q/b}), \qquad p(q)=\frac{e^{q/b}}{1+e^{q/b}}.
\end{align*}
By the symmetry of LMSR, $q^- = -q^+$ and $\phi_b^-(B) = -\phi_b^+(B)$. We will therefore solve for $\phi_b^+(B)$.
To find $q^-$, we set $p(q)=f/(1+f)$ and solve for $q$, which gives $q^-=b \log f$. Now, substituting the relevant components into the expression $\phi_b^+(B) = C_b^{-1}(B+C_b(q^-))-q^-$
leads to the following expression for inequality~\eqref{eq:main}:
\begin{equation} \label{eq:main-lmsr}
	fM \ge \frac{2b(\log ((1+f)e^{B/b}-1)- \log f)}{\delta}.
\end{equation}
In the case where we allow agents to participate multiple times, inequality~\eqref{eq:main2} remains unchanged.

We plot~\eqref{eq:main} and~\eqref{eq:main2} in Figure~\ref{fig:simulations}, considering their tight versions as equalities. First consider Figure~\ref{fig:fvsB}, which represents the worst-case scenario in which agents can enter multiple times and potentially spend their entire budget buying securities at minimum price $p^-$. Suppose that some entity is creating a prediction market for event $X$. Having decided on a question, the main decision is what value to set for $f$, typically in the 2-5\% range.
To do so, the market creator needs to first estimate a value for $\delta$, which will be determined by question clarity, whether the arbiters have reliable sources to check the outcome, and other such factors. Each line in the graph represents a specific value of $\delta$. With $\delta$ fixed, the market creator can estimate a value for $\frac{B}{M}$. This is the maximum proportion of money that any single arbiter will contribute to the market. We would expect $\frac{B}{M}$ to be small for markets that generate a lot of interest, while niche markets would be vulnerable to having a single agent contribute a large percentage of the total trade. Given these values, the creator can arrive at the smallest $f$ that is guaranteed to subsidize truthful reporting. From the graph, we see that in the case of a question where $\delta=1$ and $\frac{B}{M}=0.001$, we can subsidize the arbiter payment with a fee of approximately 4\%. This may seem large for a clear question with high participation, but we stress that this fee is based on a severe worst case where an agent is able to spend its entire budget purchasing securities at the minimum price.

Now consider Figure~\ref{fig:main}, which returns to the case where an agent only enters once, where liquidity now plays a role and we have to consider different values for parameter $b$. Figure~\ref{fig:main} includes two reasonable values for $b$, as well as three different values for $\delta$. We note that the situation looks considerably better for the market creator; indeed, the horizontal axis is now ten times larger indicating that we can now handle much smaller markets. When $\delta=1$, we can handle situations where a single agent can contribute as much as 2\% of the total trade with a fee of less than 5\%. Even for questions with $\delta$ as low as $0.3$, in a market with $b=1000$ and $\frac{B}{M}=0.005$ the fee can be set to approximately $5\%$.

\section{Conclusion}
\label{sec:conclusion}

This paper proposed and analyzed a mechanism where the outcome of an MSR prediction market is determined via a peer prediction mechanism among a set of arbiters. The mechanism relies on two key adaptations to incentivize truthful arbitration: market shares pay out according to the proportion of arbiters who vote affirmatively, instead of a binary payout, and peer prediction payments are based on the midpoint of the two possible posteriors, rather than the prior. We showed that, with this combination of adaptations, it is possible to charge a trading fee that fully subsidizes truthful arbitration. Calibration based on plausible values of question clarity and trading volume suggests that realistic fees of 5\% should be sufficient in practice.

While we have derived conditions under which truthful reporting is an equilibrium, there remains the possibility of the arbiters reporting according to uninformative equilibria that achieve higher payoff. This problem has recently been addressed in the peer prediction literature in situations where reporters complete several tasks instead of just one \citep{dasgupta2013crowdsourced,shnayder2016informed}; it may be worthwhile to apply these techniques to our setting. In practice, arbiters vote on many questions across time, which opens the possibility of using a reputation system to incentivize them to vote truthfully and accurately~\citep{peterson2015augur}. The interplay of the incentives from all these mechanisms is fertile ground for future research.

\section*{Acknowledgments} 

\begin{small}

This work was done in part while Freeman was at Microsoft Research, New York City. He also thanks NSF IIS-1527434 and ARO W911NF-12-1-0550 for support.

\bibliographystyle{natbib}

\end{small}

\clearpage

\color{black}
%
%
%
%
%
%

\end{document}